
\documentclass[letterpaper, 10 pt, conference]{ieeeconf}  

\IEEEoverridecommandlockouts                              
\overrideIEEEmargins

\makeatletter
\let\NAT@parse\undefined
\makeatother

\usepackage[dvipsnames]{xcolor}

\newcommand*\linkcolours{ForestGreen}

\usepackage{times}
\usepackage{graphicx}
\usepackage{amssymb}
\usepackage{gensymb}
\usepackage{amsmath}
\usepackage{breakurl}
\usepackage{caption}
\usepackage{subcaption}
\usepackage{cuted}

\usepackage[utf8]{inputenc}

\usepackage[ruled, linesnumbered]{algorithm2e} 
\usepackage[figuresright]{rotating}
\usepackage{booktabs} 
\usepackage{multirow} 

\usepackage{url,hyperref}
\hypersetup{
colorlinks,
linkcolor=\linkcolours,
citecolor=\linkcolours,
filecolor=\linkcolours,
urlcolor=\linkcolours}

\usepackage[labelfont={bf},font=small]{caption}
\usepackage[none]{hyphenat}

\usepackage{mathtools, cuted}

\usepackage[noadjust, nobreak]{cite}
\usepackage{tabularx}
\usepackage{amsmath}
\usepackage{float}
\usepackage{pifont}

\newcolumntype{Y}{>{\centering\arraybackslash}X}
\usepackage[]{placeins}

\usepackage{placeins}

\usepackage{tikz}

\usepackage[framemethod=tikz]{mdframed}
\usepackage{afterpage}
\usepackage{stfloats}
\usepackage{atbegshi}
\usepackage{amsmath}
\usepackage{graphicx}
\usepackage{subcaption}
\usepackage{multirow}

\usepackage{times}  
\usepackage{float}
\usepackage{amsmath}
\usepackage{amsfonts}
\usepackage{helvet}  
\usepackage{courier}  
\newtheorem{theorem}{Theorem}
\newtheorem{definition}{Definition}

\newcommand{\handlethispage}{}
\newcommand{\discardpagesfromhere}{\let\handlethispage\AtBeginShipoutDiscard}
\newcommand{\keeppagesfromhere}{\let\handlethispage\relax}
\AtBeginShipout{\handlethispage}

\usepackage{comment}
\title{\LARGE \bf
Game Theory Meets Statistical Mechanics in Deep Learning Design
}

\author{Djamel BOUCHAFFRA$^{1}$, Fayçal YKHLEF$^{2}$, Bilal FAYE$^{3}$, Hanane AZZAG$^{4}$, Mustapha LEBBAH$^{5}$  \\
	\normalsize djamel.bouchaffra@gmail.com, ykhlef.faycal@gmail.com, faye@lipn.univ-paris13.fr, azzag@univ-paris13.fr, mustapha.lebbah@uvsq.fr
}

\begin{document}

\maketitle
\thispagestyle{empty}
\pagestyle{empty}

\begin{abstract}
We present a novel deep graphical representation that seamlessly merges principles of game theory with laws of statistical mechanics. It performs feature extraction, dimensionality reduction, and pattern classification within a single learning framework. Our approach draws an analogy between neurons in a network and players in a game theory model. Furthermore, each neuron viewed as a classical particle (subject to statistical physics' laws) is mapped to a set of actions representing specific activation value, and neural network layers are conceptualized as games in a sequential cooperative game theory setting. The feed-forward process in deep learning is interpreted as a sequential game, where each game comprises a set of players. During training, neurons are iteratively evaluated and filtered based on their contributions to a payoff function, which is quantified using the Shapley value driven by an energy function. Each set of neurons that significantly contributes to the payoff function forms a strong coalition. These neurons are the only ones permitted to propagate the information forward to the next layers. We applied this methodology to the task of facial age estimation and gender classification. Experimental results demonstrate that our approach outperforms both multi-layer perceptron and convolutional neural network models in terms of efficiency and accuracy.
\end{abstract}


\section{Introduction}
Deep learning (DL) based on graphical representations has proven effective, especially when domain-specific knowledge for feature extraction is limited~\cite{archana2024deep}. For instance, DL models have demonstrated high performance in complex tasks such as medical image classification~\cite{kumar2024medical}~\cite{Bamber2023}, natural language processing, and speech recognition~\cite{chou2024deep}~\cite{kheddar2024automatic}~\cite{10584052}. However, these deep learning models often function as black boxes, delivering impressive results in data classification without providing insights into understanding the model's internal workings as well as unraveling the causal mechanisms underlying their predictions~\cite{10.5555/3202377}~\cite{9363924}. This lack of interpretability and explainability, essentially the ability to comprehend and trace cause and effect within a system, limits their applicability in domains they were not specifically trained for. In terms of their structural design, DL models exhibit several critical limitations: (i) they process information sequentially from one layer to the next without formally evaluating the individual contribution of each neuron, (ii) they have difficulty determining activation levels associated with groups of neurons within a layer, (iii) they struggle to identify the most informative neurons within a layer, often relying on random dropout techniques to mitigate noise and reduce overfitting, and (iv) most of the DL models lack probabilistic measures to express information uncertainty. 

To address these limitations, several approaches exploiting game theory (GT) were proposed in the deep machine learning literature~\cite{maschler2020game} ~\cite{hu2022ore}~\cite{ren2021game} ~\cite{li2021deep}~\cite{yin2024game}. Likewise, ~\cite{hd2017} relied on game theory to improve prediction in ensemble learning. They defined the
pruned ensemble as the minimal winning coalition made of the members that together
exhibit moderate diversity. Moreover, ~\cite{liu2024game} explored unlearnable example attacks using a game-theoretic approach, where the attack is modeled as a nonzero-sum Stackelberg game. ~\cite{meng2023efficient} introduced an efficient approach that leverages a combination of deep learning techniques and game theory to enhance the performance and scalability of solving extensive-form games. These games, characterized by their complex decision-making processes with latent information, pose significant challenges in strategic planning. The research by ~\cite{xing2024deep} addresses the challenge of robots finding optimal paths while avoiding collisions with humans and other robots. Traditional Deep Reinforcement  Learning (DRL) struggles with slow convergence in such complex scenarios. To improve performance, the study introduces a hybrid approach that integrates DRL with game theory. Furthermore, ~\cite{ren2021game} have demonstrated that a deep neural network can be modeled as a non-atomic congestion game, irrespective of whether it is fully connected or only locally connected. Additionally, they have proved that optimizing the weight and bias vectors for a given training set is equivalent to finding the optimal solution for the associated non-atomic congestion game. Other applications of game theory to deep neural networks can be found in ~\cite{hazra2022applications} and ~\cite{Hazra2024}.

A different front emanates from the field of statistical mechanics (SM) has been investigated in order to gain insight into the understanding and optimization of deep learning models ~\cite{tuckerman2023statistical}~\cite{ kollmannsberger2021deep}. ~\cite{PhysRevE.108.014309} applied mean-field theories to analyze the information propagation in neural networks, which helps identify the `edge of chaos' and dynamic isometry conditions for optimal learning and generalization. These theories provide a framework for initializing neural networks in a way that maximizes mutual information, enhancing their performance from the start. In the context of continual learning, statistical mechanics offers insights through the development of variational principles and mean-field potentials.

Our main contribution in this manuscript is fourfold:
\begin{itemize}
\item \textbf{Fusion of GT and SM}: A seamless combination of game theory and statistical mechanics in deep learning design is applied. In this setting that we propose under the name of `NEUROGAME', the collaboration between neurons within layers in a neural network is grounded in game theory driven by statistical mechanics laws. 
\item \textbf{Probabilistic Signal Transmission}: The flow of information, with a Gibbs distribution value, is propagated across layers in the network. 
\item \textbf{Cortical Activation}: A neuronal region of activation within the network is described as a coalition of players—connected neurons cooperating to optimize the payoff function. 
\item \textbf{Information Filtering and Model Regularization}: The coalition with the maximum payoff is deemed the \textit{winning coalition}, and the contribution of each neuron within this coalition is quantified using the Shapley value. Neurons with high contributions form a \textit{strong coalition}, and only these neurons transmit information forward to the next layer. In this very phase, some neurons are dropped out to a achieve a \textit{dynamic model regularization}.
\end{itemize}

\section{Some Basics of Game Theory}
The following definitions are essential to grasp some knowledge about game theory. 
\subsection{Simple Games}
To gain insight into the proposed methodology, we introduce some principles related to game theory, focusing on the concepts of simple games and cooperative sequential games~\cite{von2021game}~\cite{roth2015gets}~\cite{mendelson2024introducing}. A simple game involves a set of $n$ players; a set of strategies $s_i \in \mathcal{S}_i$ (possible actions) associated with each player $i \in \mathcal{N} = \{1,2,\ldots,n\}$, where 
$s=(s_1,s_2,\ldots,s_n) \in  \mathcal{S} = (\mathcal{S}_1\times\mathcal{S}_2\times\ldots\times\mathcal{S}_n)$ is a set of pure strategy profiles;
a set of payoffs (real values) $v_i(s_1,s_2,\ldots,s_n)\in \mathbb{R}$ ($v_i: \mathcal{S} \longrightarrow \mathbb{R}$) assigned to each player $i$ for every possible list of strategy choices—where strategies translate into outcomes and each player has preferences over these outcomes represented by their payoffs—and a level of information or belief, which encompasses what players know and believe about the situation and one another, and what actions they observe before making decisions.
The game is finite if $\mathcal{S}$ is finite.
\subsection{Notion of Simple Coalition} 
We define the concept of coalition and the contribution of each player within this coalition. A simple coalition is a group of players $\mathcal{C} \subset \mathcal{N}$ that cooperate to achieve a common goal. The set $\mathcal{N}$ is often referred to as the grand coalition. Every coalition $\mathcal{C}$ has a set of actions. If the payoffs $v(\mathcal C)$ associated with a coalition $\mathcal C$ are freely redistributed among its members, this condition is known as the Transferable Utility Assumption (TUA).
A coalitional game with transferable utility is a pair $(\mathcal{N}, v)$, in which $\mathcal{N}$ is a finite set of players,~and~$v: 2^\mathcal{N} \longrightarrow \mathbb{R}$~maps each~coalition~$\mathcal{C}$ to a real-valued payoff function $v(\mathcal{C})$ that the coalition members can distribute among themselves. We assume that $v(\emptyset) = 0$. Given a coalitional game $(\mathcal{N}, v)$, the Shapley value associated with player $i \in \mathcal{N}$ is given by:
\begin{equation}
    \phi_i(\mathcal{N}, v) = \frac{1}{n!} \sum_{\mathcal{C} \subseteq \mathcal{N} \setminus \{i\}} |\mathcal{C}|! (n - |\mathcal{C}| - 1)! [v(\mathcal{C} \cup \{i\}) - v(\mathcal{C})].
    \label{shapley}
\end{equation}

The Shapley value expresses the average marginal contribution of player $i$, averaging over all different coalitions with respect to which the grand coalition can be built starting from the empty one~\cite{maschler2020game}.

\section{NEUROGAME: Game Theory Meets Statistical Mechanics}
We present in this section the analogy between conventional 
DL and NEUROGAME, as well as the description of all the 
components needed to fully comprehend how this proposed deep learning model operates. 
\subsection{Comparison between Conventional Deep Learning and NEUROGAME}
we make the following correspondence between cooperative 
game theory and deep learning representation:
\begin{enumerate}
\item A layer of a deep neural network represents a game. 
\item A neuron in a layer of a deep neural network represents a player of the game. Neurons are viewed as particles interacting via statistical mechanics laws.
\item Each neuron is mapped to a set of actions representing its 
current state (a specific interval of neuron activation 
values). This set of actions acts as its strategy. 
\item An input to the deep neural network structure corresponds 
 to the information (or observation within the environment) 
 that is available at any time of the game. In our setting, an 
 input is a 2D image. 
 \item A neuronal region depicts a group of connected neurons $(s_1,\ldots,s_n)$ that are located within a certain neighborhood in the cerebral cortex; it constitutes a simple coalition of players. 
 \item A payoff $v_i(s_1
, \ldots, s_n)$ assigned to this coalition expresses the worth of the actions exhibited by all players forming this 
 coalition. This function is conveyed through the energy 
 function assigned to a tuple of activations of neurons. This tuple is called \textit{a configuration state of the coalition}. The coalitions with high payoffs are sought: 
 They represent \textit{the winning coalitions}. The payoff 
 function reflects the quality level of the information available. 
 \item The contribution of a neuron within each winning 
 coalition is expressed by its Shapley value expressed through equation~\eqref{shapley}. Neurons with 
 high Shapley values are members of strong coalitions. 
 Only strong coalitions, extracted from the winning coalitions, are permitted to forward the flow of information from one layer to the next.
 \end{enumerate}
%
%
\subsection{Computation of a Coalition Payoff}
This section describes the relationship between the energy 
function, the Gibbs distribution and the payoff (also known 
as utility) function assigned to a coalition. The computation 
of all three functions requires a definition of a neighborhood 
system between neurons that compose a coalition.
\subsubsection{Neurons Neighborhood System:} For the sake of illustration and without loss of generality, we 
focus on neighbors of a neuron within a (3,3) neuronal grid. 
A $(3,3)$ neighborhood system $\mathcal{H}$ of a neuron 
located at coordinates $(i,j)$ is the set $\{(i-1,j), (i-1,j+1), (i,j+1), 
(i+1,j+1), (i+1,j), (i+1,j-1), (i,j-1), (i-1,j-1)\}$. 

This neighborhood system is needed during the clique
structure used by the energy function. 
\begin{definition} A set of random variables is a Gibbs random 
field (GRF) on a set $\Omega$ with respect to a neighborhood 
system $\mathcal{H}$ if and only if its configuration obeys a Gibbs 
distribution. 
\end{definition}
We now introduce the notion of configuration state that is 
needed in the evaluation of the energy and payoff functions. 
\begin{definition} A configuration state assigned to a simple 
coalition is a sequence of activation values of neurons that 
form this coalition. 
\end{definition}
\noindent This configuration state is denoted: 
$\omega_i = (a_{s_1}^i,\ldots, a_{s_n}^i)$
where: $a_{s_j}^i$ is the neuron activation value at location $s_j$ in the coalition $i$ and $n$ is the size of the coalition. 
\subsubsection{Gibbs Distribution of a Configuration State:} During a regression or classification task, we aim for the activation of neurons to progressively increase from the first layer to the last layer in a deep neural network. This behavior is compatible with the energy minimization 
principle. The Gibbs (or Boltzmann) distribution relies on the energy function assigned to the $i$-th configuration state.
\begin{definition} The Gibbs distribution function is defined as: 
\begin{equation}\label{gibbs}
P(\omega_i,T)=\frac{1}{Q}e^{\frac{-E(\omega_i)}{k_B\times T}}=\frac{e^{\frac{-E(\omega_i)}{k_B\times T}}}{\sum_{j=1}^{j=M}e^{\frac{-E(\omega_j)}{k_B\times T}}},
\end{equation}
\end{definition}
\begin{itemize}
\item $P(\omega_i,T)$
 is the probability of the $i$-th configuration state at temperature $T$, 
\item $E(\omega_i)$ is the energy of the $i$-th configuration state, 
\item $k_B$ represents the Boltzmann constant $(k_B\approx 1.38\times10^{-23})$,
\item $T$ is the temperature of the system, 
\item $M$ denotes the number of all configuration states 
associated to all simple coalitions within a layer, 
\item $Q$ is the canonical partition function (normalizing 
factor).
\end{itemize}
This distribution shows that configuration states with lower 
energy will always be assigned a higher probability of being 
occupied than those with higher energy. However, the 
energy assigned to a configuration state is defined via a 
potential function expressed through the Ising model using 
bonding strengths (synaptic links) between neurons in a 
lattice structure. This energy, which is a Hamiltonian 
function, is therefore expressed as: 
\begin{equation}\label{energy}
E(\omega)=\sum_{<p,q>}b_{pq}\times\left(\frac{1}{a_p\times a_q}\right)+\sum_pf_p\times\left(\frac{1}{a_p}\right),
\end{equation} 
\begin{itemize}
\item $b_{pq}$ is the bonding strength between two neighbor 
neurons $p$ and $q$, 
\item $f_q$ is the external magnetic field interacting with the 
lattice, 
\item $a_p$, and $a_q$ are non-zero activation values assigned 
to neuron $p$ and $q$, respectively,
\item $<p,q>$ is a pair of neighbor neurons. 
\end{itemize}
If we set $f_p=\alpha$ and $b_{pq}= \beta$, therefore the Ising model 
expressed via equation~\ref{energy} can be rewritten as follows: 
\begin{equation}\label{energy_bis}
E(\omega)=\alpha\sum_p\left(\frac{1}{a_p}\right)+\beta\sum_{q\in \mathcal{H}(p)}\left(\frac{1} {a_p\times a_q}\right), \forall p
\end{equation}
where $\mathcal{H}(p)$ is a $(m,n)$ neighborhood system. The second 
summation is over pairs of neighboring neurons. The energy decreases when the activation values in a configuration state are high. \textit{In other 
words, a smaller energy means a higher neuronal activation.} 
However, we consider the temperature 
$T$ as dependent on the iteration number 
$i$ during the training of NEUROGAME. It is expressed as follows:
\begin{equation}\label{temperature}
T(i)=\frac{c\times 10^{23}}{\ln{(1+i)}},
\end{equation}
where the numerator 
is a large constant value that ensures a 
high temperature at initialization. Therefore, using equation~\ref{gibbs}, one can compute the Gibbs distribution $P(\omega_i,T)$ assigned 
to each configuration state. 
\subsubsection{Generation of Configurations States:} In order to compute the Gibbs distribution, one 
has to estimate the normalizing term that requires $M$ 
configuration states. The set of configuration states 
contained in one layer is built using a grouping (set of 
neurons acting together) containing neurons that are close to 
each other. An element of this grouping can be a 4$\times$4 (or 
5$\times$5) grid of neurons. \textit{A simple coalition in a layer is composed of 
neurons that are nearby with respect to a distance measure. }
We generate through this partitioning process $M$ 
configuration states with different levels of neuron 
activations. Moreover, each configuration state is assigned a 
Gibbs distribution value. 
\subsubsection{Layer Neighborhood System:} We show in this step how a neighborhood system (a lattice structure)
can be constructed in 
order to compute the energy associated to the Ising model. 
A neighborhood system is based on a metric (or distance) 
between neurons of a layer. 
This set 
of neighbors associated to neuron $(i,j)$ is composed of the 
sites $\{(i,j-1), (i-1,j), (i,j+1), (i+1,j)\}$. 
\subsubsection{The Payoff Function:} Since we are in the context of a collaborative game theory, 
therefore, the contribution of a group of players should 
induce a higher payoff than the one incurred by a single 
player within a simple coalition. Furthermore, a maximum payoff value should be assigned a minimum energy value. Using Boltzmann’s distribution, this minimum energy value 
corresponds to a maximum Gibbs distribution value. We now define the payoff as being proportional to the Gibbs distribution:
\begin{definition} The payoff function assigned to a simple coalition is expressed as follows:
\begin{equation}\label{payoff}
\text{Payoff}(\omega_i,T) =\ln\left(\frac{k_1}{1-P(\omega_i,T)}\right),
\end{equation}
where $k_1$ is a positive control parameter and the natural 
logarithm is applied to smooth this function.
\end{definition}
One can notice 
that a high payoff corresponds to a low neuronal energy 
value. Neurons are supposed to behave as microscopic 
physical particles interacting seamlessly. The configuration 
state $\omega^*$ with the maximum Payoff value is assigned the 
winning coalition among all simple coalition associated to 
the $M$ possible configuration states.
\begin{definition}
A configuration state $\omega^*$ with a maximum 
Payoff value is associated to a winning coalition among all 
possible simple coalitions. 
\end{definition}
The Payoff value represents the worth of the winning 
coalition. It tallies the total expected sum of payoffs the 
members of this coalition can gain by cooperating. 
However, instead of considering only one winning coalition, 
a set of $p$ winning coalitions derived from $p$ top choices of 
payoff values are considered. 
\subsubsection{The Concept of Strong Coalition:} The payoff value assigned to a configuration state of is needed during the Shapley value computation. This payoff 
corresponds to the utility function $v$ used in the Shapley function expressed by equation~\ref{shapley}. This payoff function requires the 
computation of: $v(\mathcal{C}\cup\{i\})-v(\mathcal{C})$, which is the leading term 
in the Shapley value computation, associated to player $i$, denoted 
$\phi_i(\mathcal{N},v= \text{Payoff})$. This leading term corresponds 
to: 
\begin{equation}
\text{Payoff}(\mathcal{C}\cup\{i\})-\text{Payoff}(\mathcal{C}), \forall \mathcal{C}{\subseteq(\mathcal{N}\setminus\{i\})}.
\end{equation}
The summation used in equation~\ref{shapley} consists in extracting 
all subsets $\mathcal{C}$ from the simple coalition $\mathcal{N}$ (set of players) that 
do not contain player $i$. The number of these subsets is $2^{\mathcal{N}-1}$. 
However, among all subsets, only those subsets whose 
cardinalities are greater or equal than $2$ are considered, since 
the singletons do not form coalitions. 
Once the winning coalition is identified, members of this 
coalition who contributed most to the payoff are maintained; 
the other members with low contributions are dropped out. This regularization technique that is not based on randomness represents one key feature of novelty exhibited by NEUROGAME. 
\begin{definition} A strong coalition is composed of all neurons whose Shapley values are greater than a threshold value $\rho$.
\end{definition}
Neurons with a high payoff (or high Gibbs distribution) are those that form the strong coalition. The threshold $\rho$ is 
dynamic; it involves the contribution of each neuron forming a coalition within a network layer and the iteration 
number $i$: it is expressed specifically via the following function:
\begin{equation}
\rho(S_{c_j}^i,i)=Q_1(S_{c_j}^i) \times \ln(1+i).
\end{equation}
The function `$\ln$' represents the natural logarithm, while $Q_1(S_{c_j}^i)$ denotes the first quartile of the set $S$ of Shapley values (sorted in ascending order) assigned to the set of neurons forming a winning coalition $c_j$ for $j=1, \ldots, p$, with each coalition being of size $n$. If $n$ is the number of these values, therefore, this first quartile is equal to $(n+1)/4$; it 
indicates that $25\%$ of the data are below this point.
\begin{theorem}[Shapley Threshold Behavior]
For large values of $n$ (coalition size) and $i$ (iteration number during training), the function $\rho(S_{c_j}^i,i)$  will tend to increase, with the growth rate influenced by $ln(1+i)$.
\end{theorem}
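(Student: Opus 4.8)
The plan is to treat $\rho$ as a function of the two (continuously embedded) variables $n$ and $i$ and to establish monotonicity by exploiting its factorized structure $\rho(S_{c_j}^i,i) = Q_1(S_{c_j}^i)\cdot \ln(1+i)$. Since the two factors depend on $i$ and $n$ in an essentially separable way, I would show that $\rho$ increases in each argument on its own and then combine the two signs to obtain joint growth for large $n$ and $i$. The first preparatory step is to invoke the explicit expression $Q_1 = (n+1)/4$ recorded just before the statement, which ties the first quartile to the coalition size, and to note that the Shapley values are non-negative marginal contributions, so that $Q_1 \ge 0$.

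For the dependence on $i$, I would differentiate the logarithmic factor,
\begin{equation}
\frac{\partial}{\partial i}\,\ln(1+i) = \frac{1}{1+i} > 0,
\end{equation}
so that $\ln(1+i)$ is strictly increasing and concave, and consequently
\begin{equation}
\frac{\partial \rho}{\partial i} = \frac{Q_1(S_{c_j}^i)}{1+i} > 0
\end{equation}
whenever $Q_1 > 0$. This is precisely the sense in which the growth rate is governed by $\ln(1+i)$: the per-iteration increment of $\rho$ decays like $1/(1+i)$, the derivative of $\ln(1+i)$, so $\rho$ keeps rising but with diminishing speed as training advances.

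For the dependence on $n$, substituting $Q_1 = (n+1)/4$ gives $\rho = \tfrac{1}{4}(n+1)\ln(1+i)$, whence
\begin{equation}
\frac{\partial \rho}{\partial n} = \frac{\ln(1+i)}{4} > 0 \quad \text{for } i \ge 1,
\end{equation}
so $\rho$ also increases with coalition size once at least one training iteration has elapsed. Combining the two computations, both partial derivatives are positive on the regime $n\ge 1$, $i\ge 1$, which delivers the claimed joint monotonicity for large $n$ and $i$.

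The hard part will be making the behaviour of the quartile factor precise. The excerpt writes $Q_1 = (n+1)/4$, which is really the \emph{rank position} of the first quartile rather than its numerical \emph{value}; if one instead reads $Q_1$ as the value of an order statistic of the Shapley values, monotonicity in $n$ is no longer automatic and would demand a distributional hypothesis—e.g. an order-statistics or law-of-large-numbers argument showing that the empirical first quartile converges to a strictly positive population quantile as $n\to\infty$. I would therefore either adopt the rank-based reading of the excerpt (making the $n$-monotonicity immediate) or supply the mild assumption that the Shapley-value distribution places positive mass below its lower quartile, so that $Q_1$ stays bounded away from $0$. A secondary technicality is the sign condition: the factor $\ln(1+i)$ is positive only for $i\ge 1$, so the statement should be understood on $i \ge 1$, which is consistent with the hypothesis of large $i$.
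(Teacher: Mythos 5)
Your main computation rests on reading $Q_1(S_{c_j}^i) = (n+1)/4$ as the numerical \emph{value} of the first quartile, from which you derive $\partial\rho/\partial n = \ln(1+i)/4 > 0$, i.e.\ growth of $\rho$ linear in $n$. This is the opposite of what the paper's proof asserts, and more importantly it would falsify the theorem as stated: if $Q_1$ really scaled like $n/4$, then for large $n$ the dominant driver of $\rho$ would be the linear factor $n$, not $\ln(1+i)$, contradicting the claim that the growth rate is governed by $\ln(1+i)$. The paper's own proof makes the intended reading explicit: $(n+1)/4$ is the \emph{rank} of the first quartile in the sorted list of Shapley values (``it indicates that $25\%$ of the data are below this point''), and the proof stresses that as the coalition size $n$ increases, ``the value of $Q_1(S_{c_j}^i)$ does not necessarily increase. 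It reflects the position within the ordered data rather than growing unbounded.'' So the rank-based reading you adopt as your quick route is precisely the reading the paper rejects; the boundedness of $Q_1$ in $n$ is not a technicality to be sidestepped but the very point on which the conclusion rests.

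That said, you spotted the problem yourself in your closing paragraph, and your fallback—treat $Q_1$ as an order statistic of the Shapley-value distribution, assume (or establish via a quantile-convergence argument) that it remains bounded and bounded away from $0$, and conclude that $\rho$ increases in $i$ with increments decaying like $Q_1/(1+i)$—is exactly the paper's argument, stated more rigorously than the paper itself does. Your derivative computation $\partial\rho/\partial i = Q_1(S_{c_j}^i)/(1+i) > 0$ sharpens the paper's informal remark that $\ln(1+i)$ ``grows without bound, but \ldots at a gradually slowing, logarithmic rate,'' and your observation about the sign condition $i\ge 1$ is a legitimate point the paper glosses over. To repair the writeup, discard the $n$-monotonicity claim entirely—the theorem does not require $\rho$ to increase in $n$; it requires $Q_1$ to remain bounded as $n$ grows so that $\ln(1+i)$ alone dictates the asymptotics—and promote your distributional hypothesis on the Shapley values from a side remark to the working hypothesis of the proof. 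Note also that the paper supplies no positivity or convergence guarantee for $Q_1$ either, so your proposed law-of-large-numbers strengthening would actually go beyond what the paper proves.
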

\begin{proof}
If $i$ increases during training, the natural logarithm function $ln(i)$ grows without bound, but it does so very slowly compared to linear functions. Therefore, $ln(1+i)$ will continue to increase, but at a gradually slowing, logarithmic rate. However, $Q_1(S_{c_j}^i)$ depends on the Shapley values distribution. As we increase the coalition size $n$, the value of $Q_1(S_{c_j}^i)$ does not necessarily increase. It reflects the position within the ordered data rather than growing unbounded. Finally, the combined effect on $\rho(S_{c_j}^i,i)$ will be affected by the product of these two functions: $Q_1(S_{c_j}^i)$ and $ln(1+i)$. In conclusion, the primary driver of the behavior of the threshold $\rho(S_{c_j}^i,i)$ for large $i$ and $n$ will be $ln(1+i)$.
\end{proof}
\noindent It is worth noting that as NEUROGAME learns, the selected coalitions grow progressively stronger.
\subsection{The Different Phases in NEUROGAME}
The following sequence of operations describes NEUROGAME:
\begin{enumerate}
\item The input is a colored (or grey-level) image with its
three colors components \textit{red}, \textit{green}, \textit{blue} $(n_c = 3)$, with
a dimension equal to (n×n) for each color: $(n \times n \times n_c)$.
\item A convolution operation with three filters $F_1$, $F_2$ and $F_3$,
each one with a dimension equal to $f \times f$ is subsequently
applied, to each color $(f\times f \times n_c)$. An arithmetic mean
value is computed for each element of the three colored
matrices after convolution.
\item The results of the convolution between filters and the
image is represented by three feature map matrices with dimension $(n-f+1)\times(n-f+1)\times n_c$.
\item An activation function is applied to the product of the
feature map matrices and the first weight matrix $W_1$,
and the result is stored as the three activation map
matrices with a predefined dimension $(l\times l)$.
\item Generation of $M$ simple coalitions within each of the the
three activation map matrices of the first layer. The value of $M$ is equal to the dimension of a layer divided by $n$. Therefore, each simple coalition has a size of $n$ neurons and is assigned a configuration state (activation values).
\item Computation of the energy value for each simple coalition (configuration state) within an activation map matrix. The set of energy values within an activation map
of the first layer forms an energy map. Therefore,
we obtained three energy maps.
\item Selection of $p$-top choices simple coalitions given their payoff values. They are the $p$ winning
coalitions of each energy map.
\item Extraction of the strong coalitions amongst the winning
ones. Neurons of the winning coalitions whose Shapley
values exceed the threshold value $\rho(S_{c_j}^l,i)$ are maintained
and neurons with Shapley values that fall under this
threshold value are removed.
\item This entire process continues during the first training
iteration until reaching the last layer $k$. The activation values of the strong
coalitions corresponding to the three energy maps are
concatenated to form a feature vector assigned to the
input image.
\item This feature vector is subsequently fed to a fully connected neural network with $k$ hidden layers.
\item The Softmax operation is applied for the evaluation of
the loss function during training.
\item All weights are updated using the opposite direction of
the gradient of the loss function.
\end{enumerate}
Figure~\ref{fig1} illustrates the NEUROGAME training procedure 
when the observation input is an image and the number of 
labels for a classification task is four $(C_1, C_2, C_3, C_4)$.
\begin{figure*}[ht]
    \centering
    \includegraphics[width=0.85\textwidth]{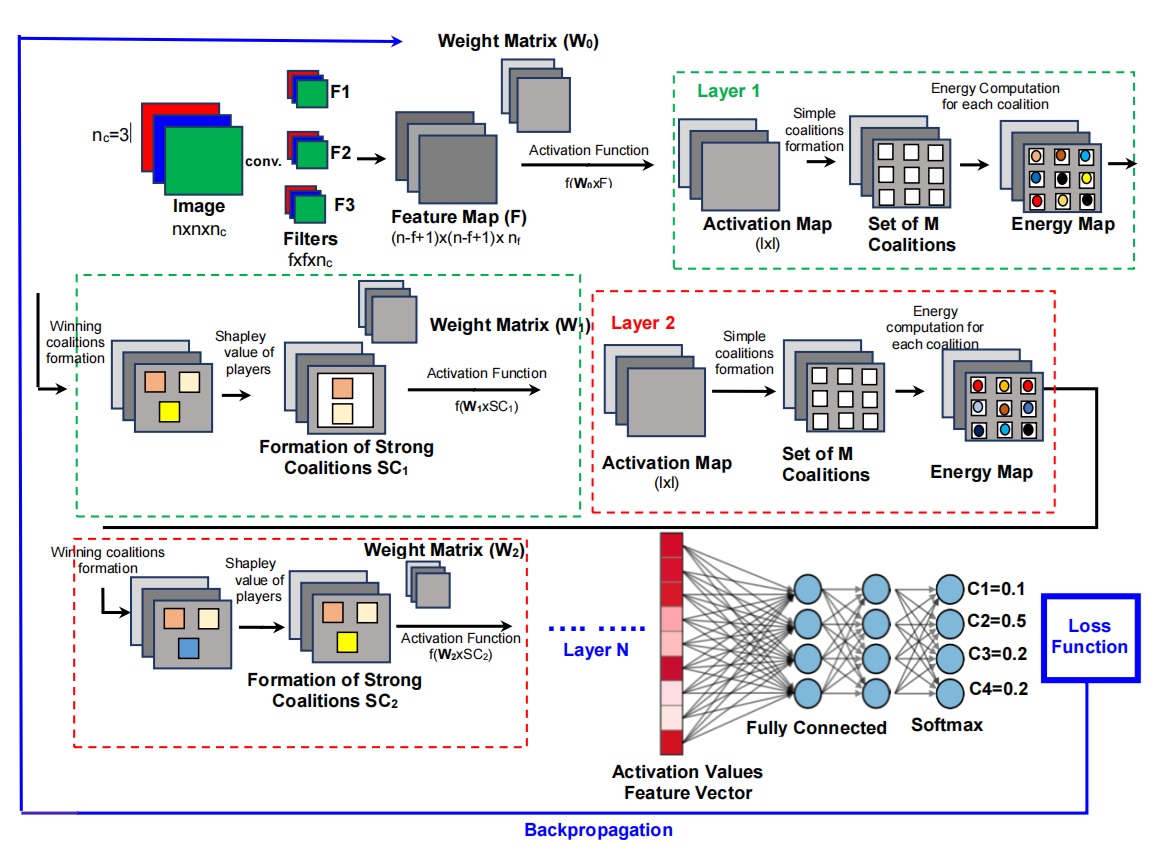}
    \caption{\footnotesize{The training procedure of NEUROGAME showing the passage from \textit{M simple coalitions} to \textit{p winning coalitions} and then to \textit{p strong coalitions} generated via the Shapley filtering process. The computation of the strong coalitions (integrated into a fully connected neural network) is repeated across all $k$ layers until NEUROGAME converges. The feature vector extracted at this convergence point is composed of activation values of the last optimal strong coalitions.}} 
    \label{fig1}
\end{figure*}

\subsection{NEUROGAME Layers and Information Propagation} 
In this section, we show how a layer of NEUROGAME is built. We also describe how the nformative signals are communicated to the next layer during training of the entire deep neural network.
\subsubsection{NEUROGAME Layer:} A layer in this proposed deep neural network is composed of five components: (i) activation maps, (ii) a set of $M$ coalitions, (iii) a set of energy maps, (iv) a set of winning coalitions, and (v) a set of strong coalitions (refer to Figure~\ref{fig1}).

\subsubsection{Transmission of the Information:}
The most informative signals generated from neurons pertaining to the strong coalitions 
 (those that passed the $\rho$ test) of layer $l$ are forwarded to neurons of layer $(l+1)$. In fact, these signals represent the image by the activation function of the product of two quantities: (i) The 
activation value of a neuron within a strong coalition in layer $l$, and (ii) the weight (synaptic link) that connects this neuron to a specific neuron of layer $(l+1)$. These two quantities are the ones involved during a forward transmission of information during NEUROGAME training-based on backpropagation. 

\section{Experiment}
\indent To demonstrate the effectiveness of the proposed methodology, we have performed two different classification tasks: 1) gender classification, and 2) simultaneous age and gender classification.
\subsection{Datasets and Architecture}
To assess NEUROGAME's performance, we used two benchmarked datasets designed for distinct classification tasks: \textbf{CelebA (CelebFaces Attributes)}~\cite{liu2015deep} dataset for gender classification and \textbf{UTKFace dataset}~\cite{utkface} for age and gender classification concurrently. We now present the architectures of the two baseline models for gender classification and simultaneous age and gender classification, alongside comparisons with our proposed NEUROGAME method.\newline

\noindent\textbf{Gender Classification:  Multi-Layer Perceptron (MLP):}
\begin{itemize}
    \item Input: Images of size (64, 64, 3).
    \item Layers: Flattened input followed by dense layers (256 units ReLU, Batch-Normalization, Dropout; 128 units ReLU, Batch-Normalization, Dropout; 64 units ReLU, Batch-Normalization, Dropout).
    \item Output: Single unit with sigmoid activation for binary classification.
\end{itemize}
For comparison with NEUROGAME, a single NEUROGAME layer is added with a coalition size of (2,2) and a top-p value of 0.85.\newline

\noindent\textbf{Simultaneous Gender and Age Classification: Convolutional Neural Network (CNN):}
\begin{itemize}
    \item Input: Images of size (128, 128, 1).
    \item Layers: Four Conv2D layers (32, 64, 128, 256 filters with (3, 3) kernels and ReLU activation), followed by max-pooling (2, 2).
    \item Flatten and two dense layers (256 units each, ReLU activation), each followed by a Dropout layer.
    \item Outputs: Gender (sigmoid activation), Age (ReLU activation). 
\end{itemize}
For comparison, Conv2D layers are replaced with NEUROGAME layers (top-p=0.85, coalition size=(2,2)) to evaluate NEUROGAME's performance in classification tasks. In all NEUROGAME models, we applied a Convolutional layer with three filters to generate feature maps.
\subsection{Hyperparameter Tuning for NEUROGAME}
To determine the optimal hyperparameter values for NEUROGAME, extensive experimentation was carried out. The results showed that the most effective configuration was achieved by setting 
$\alpha$ to $0$ and 
$\beta$ to $1$ (refer to Equation~\ref{energy_bis}). With $\alpha=0$, the model’s energy is determined exclusively through the interactions between neighboring neurons, thereby simplifying the system by excluding the contributions of individual neurons. Setting $\beta=1$ preserves the original form of neighbor interactions without any additional weighting. For temperature estimation, a value of $c=1$ was found to be optimal (refer to Equation~\ref{temperature}), while $k_1=1$ was determined to be the best setting for the payoff calculation (refer to Equation~\ref{payoff}).
\subsection{Gender Classification}
\label{sec:gender}
For gender classification, we applied data augmentation through random cropping and horizontal flipping to increase training diversity and model robustness. The images were normalized by scaling pixel values to [0, 1] and dividing by 255. Models were trained with a batch size of $64$ using Adam optimizer~\cite{kingma2014adam} and binary cross-entropy loss. Figure~\ref{fig6} shows that NEUROGAME achieved more effective reduction in loss and better generalization, with a test loss of $0.2645$ compared to MLP's $0.4335$, as highlighted in Table~\ref{tab:performance_comparison}, demonstrating NEUROGAME's superior performance.
\begin{table}[H]
    \centering
    \begin{tabular}{|c|c|c|}
        \hline
        \textbf{Model} & \textbf{Test Loss} & \textbf{Test Accuracy (\%)} \\
        \hline
        MLP & 0.4335 & 80.19 \\
        \hline
        NEUROGAME & \textbf{0.2645} & \textbf{88.26} \\
        \hline
    \end{tabular}
    \caption{Test performance comparison between MLP and NEUROGAME models on CelebA dataset.}
    \label{tab:performance_comparison}
\end{table}
Furthermore, the test accuracy of NEUROGAME is $88.26\%$, substantially higher than MLP model's test accuracy of $80.19\%$. This improvement in accuracy demonstrates NEUROGAME's superior capability in generalizing from the training data to unseen data, confirming that the incorporation of NEUROGAME's specialized layer results in enhanced model performance and reliability. To further assess the efficacy of NEUROGAME, we  expanded our investigation to include in the next section a more intricate task: simultaneous classification of age and gender.
\subsection{Simultaneous Age and Gender Classification}
\label{sec:gender_age}
In age and gender classification, we use two classifiers: the first focuses on gender with binary cross-entropy loss and accuracy as the metric, while the second predicts age as a continuous value using mean absolute error as the loss function. During inference, a correct age class prediction is considered a success. Both models are optimized using the Adam optimizer with a batch size of $32$ and trained for $100$ epochs without data augmentation. This setup thoroughly assesses the performance and robustness of CNN and NEUROGAME in age and gender classification.\newline
\begin{figure}[ht]
    \centering
    \includegraphics[width=0.5\textwidth]{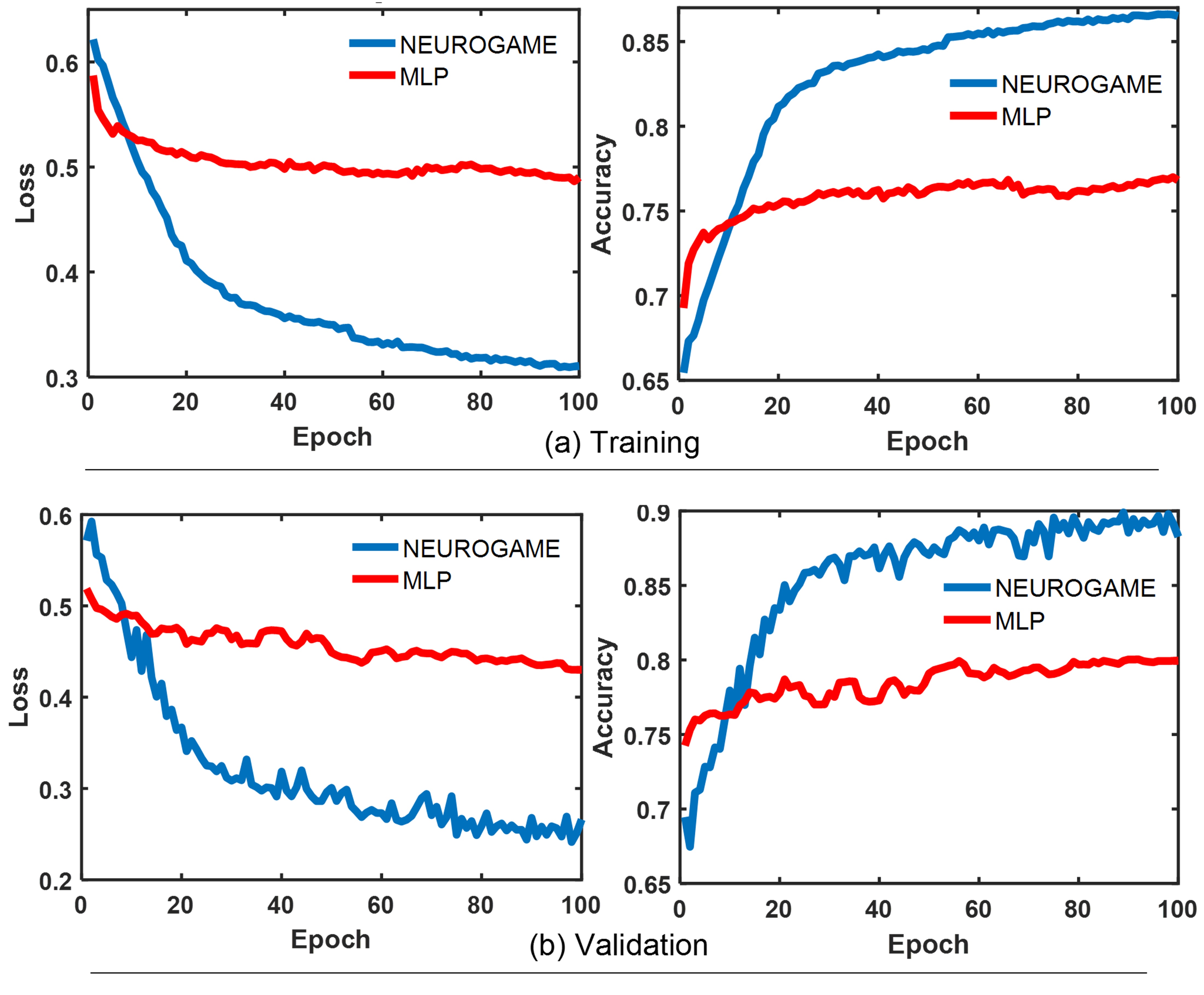}
    \caption{\footnotesize{ Comparison of training and validation losses and accuracies between MLP and NEUROGAME models. NEUROGAME shows better generalization performance, as evidenced by the lower validation loss and improved validation metrics.}} 
    \label{fig6}
\end{figure}
\begin{figure}[ht]
    \centering
    \includegraphics[width=0.5\textwidth]{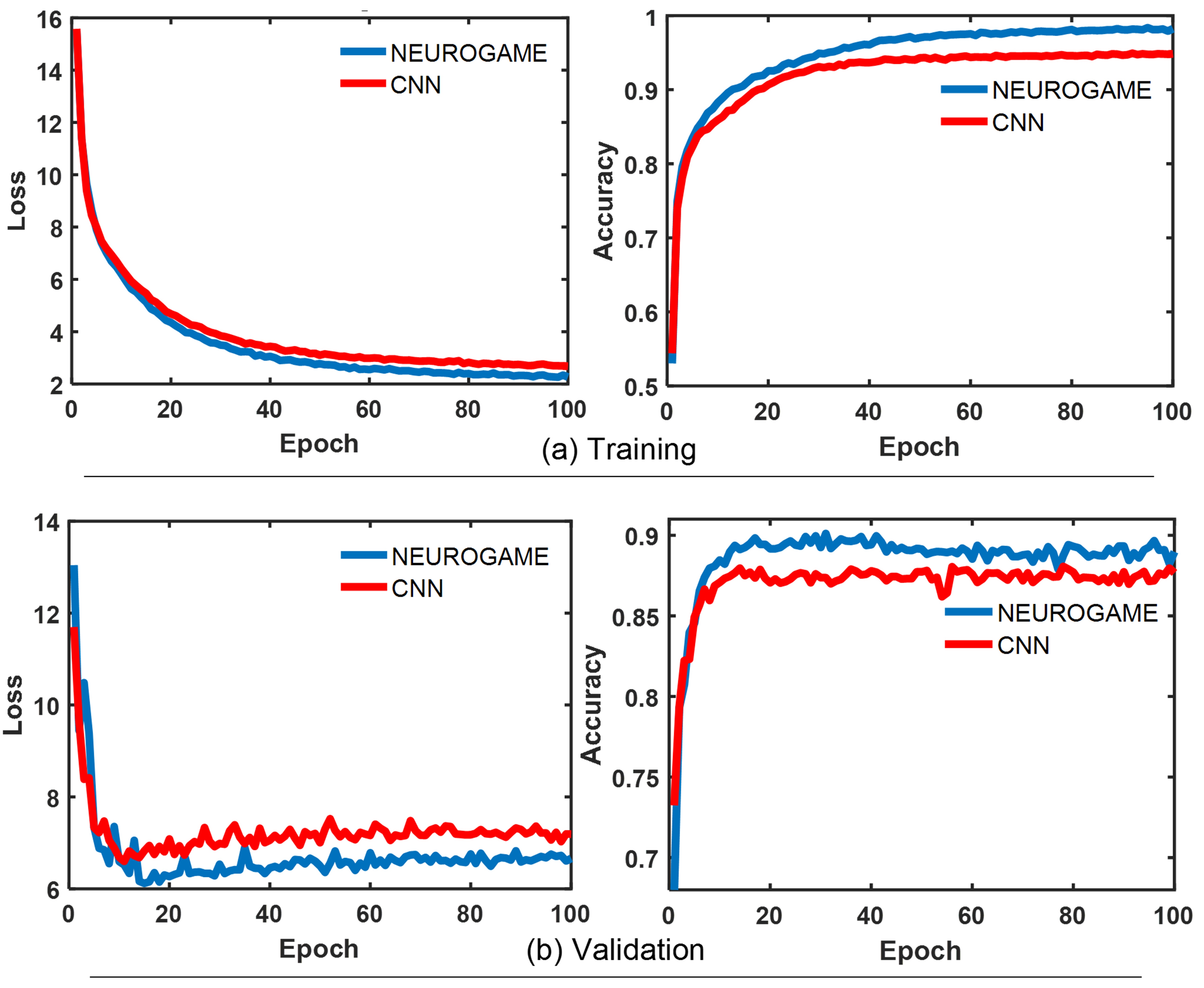}
    \caption{\footnotesize{ Comparison of training and validation performance between CNN and NEUROGAME models for gender classification.}} 
    \label{fig7}
\end{figure}
\begin{figure}[ht]
    \centering
    \includegraphics[width=0.5\textwidth]{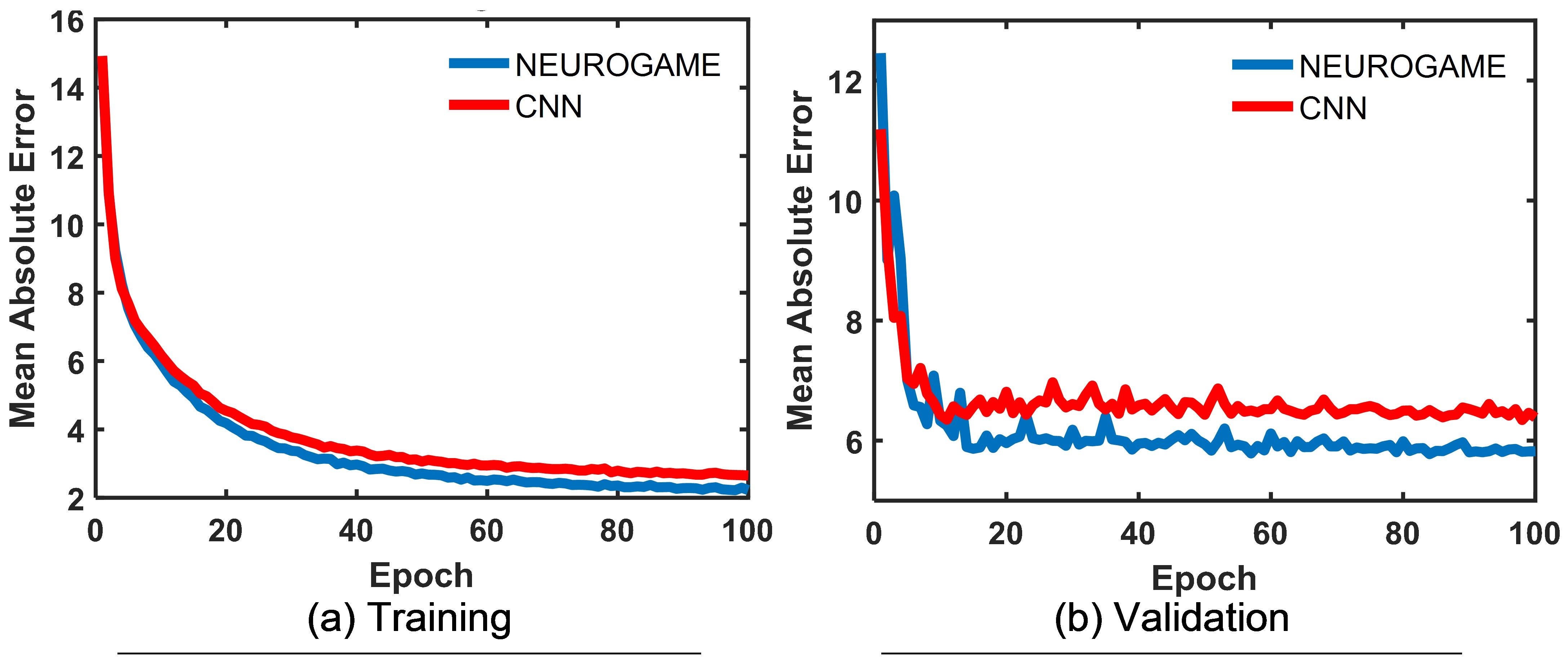}
    \caption{\footnotesize{Comparison of training and validation performance between CNN and NEUROGAME for age classification.}} 
    \label{fig8}
\end{figure}
Figures~\ref{fig7} and~\ref{fig8} compare CNN and NEUROGAME in terms of training and validation performance for gender and age classification. NEUROGAME consistently outperforms CNN, demonstrating superior generalization with lower validation loss and better metrics. Precision was computed for each model across gender, age, and combined classifications. NEUROGAME, with fewer parameters, shows higher average precision than CNN. As shown in Table~\ref{tab:sample}, NEUROGAME achieves higher precision across all age groups in gender classification and generally leads in age classification. This was validated on UTKFace test set, where NEUROGAME maintained higher precision, especially in younger and middle age categories, highlighting its robustness in multitask learning.

\begin{table}[h]
    \centering
    \resizebox{\columnwidth}{!}{%
        \begin{tabular}{c|cc|cc|cc}
            \toprule
            \multirow{2}{*}{Class} & \multicolumn{2}{c|}{Gender} & \multicolumn{2}{c|}{Age} & \multicolumn{2}{c}{Gender and Age} \\
            \cmidrule{2-7}
                                   & CNN & NEUROGAME & CNN & NEUROGAME & CNN & NEUROGAME \\
            \midrule
            {[}0, 2{]}   & 89.61  & 91.63  & 85.60  & 69.96  & 79.31  & 64.11  \\
            {[}3, 6{]}   & 95.35  & 96.63  & 79.07  & 81.43  & 76.37  & 77.61  \\
            {[}7, 12{]}  & 93.92  & 96.03  & 72.68  & 74.75  & 71.27  & 73.44  \\
            {[}13, 17{]} & 96.00  & 95.03  & 57.92  & 61.09  & 57.43  & 64.40  \\
            {[}18, 22{]} & 97.62  & 98.04  & 60.93  & 64.18  & 60.07  & 63.46  \\
            {[}23, 26{]} & 98.60  & 98.75  & 53.25  & 58.25  & 52.70  & 57.73  \\
            {[}27, 33{]} & 98.41  & 98.46  & 74.45  & 71.36  & 73.69  & 70.63  \\
            {[}34, 44{]} & 98.79  & 98.65  & 76.52  & 70.97  & 75.97  & 70.39  \\
            {[}45, 59{]} & 98.33  & 97.81  & 77.28  & 73.00  & 76.87  & 72.41  \\
            {[}60, 69{]} & 98.32  & 98.93  & 64.19  & 63.96  & 64.04  & 63.73  \\
            {[}70, 79{]} & 95.00  & 98.05  & 58.80  & 62.66  & 58.23  & 62.09  \\
            {[}80, 89{]} & 98.97  & 97.88  & 50.79  & 59.13  & 50.40  & 57.54  \\
            {[}90, 99{]} & 95.35  & 96.30  & 36.50  & 55.47  & 35.04  & 54.74  \\
            {[}100, 116{]}& 100.00 & 100.00 & 46.88  & 53.13  & 46.88  & 53.13  \\
            \bottomrule
        \end{tabular}%
    }
    \caption{precisions (\%) of CNN and NEUROGAME on UTKFace test set across age and gender categories.}
    \label{tab:sample}
\end{table}
Due to limited data in this age group, CNN model predicts an age of $93$ years, while NEUROGAME predicts $101$ years, closer to the ground truth, showing superior generalization. This indicates NEUROGAME's better handling of sparse data compared to CNN. The image was randomly selected, underscoring NEUROGAME's robustness and reliability.
\indent The two experiments highlight the effectiveness of NEUROGAME, especially in classification tasks, when compared to well-established ML models. Indeed, NEUROGAME has outperformed both MLP and CNN models in gender classification as well as in the simultaneous classification of gender and age.

\section{Conclusion and Perspectives}
We have developed a novel DL architecture, NEUROGAME, which integrates game theory and statistical physics principles. This allows neurons in the same layer to collaborate using the Shapley value function to assign contribution scores and perform controlled dropout, reducing overfitting. This Shapley-based regularization enhances network robustness and provides transparency within the architecture, functioning as a \textit{glass-box framework}.

Comparative studies show NEUROGAME outperforms MLP and CNN in gender and joint gender-age classification, showing better generalization and accuracy. This research signals a paradigm shift in deep learning, paving the way for more interpretable, efficient, and effective neural networks. As a perspective, we will explore \textit{the Banzhaf power index} to assess the influence of neuronal states in prediction tasks, potentially improving model generalization further.

\bibliographystyle{ieeetr}
\bibliography{bibliography}

\begin{thebibliography}{10}

\bibitem{archana2024deep}
R.~Archana and P.~Jeevaraj, ``Deep learning models for digital image processing: a review,'' {\em Artificial Intelligence Review}, vol.~57, no.~11, pp.~1--23, 2024.

\bibitem{kumar2024medical}
R.~Kumar, P.~Kumbharkar, S.~Vanam, and S.~Sharma, ``Medical images classification using deep learning: a survey,'' {\em Multimedia Tools and Applications}, vol.~83, no.~7, 2024.

\bibitem{Bamber2023}
S.~Bamber and T.~Vishvakarma, ``Medical image classification for alzheimer’s using a deep learning approach,'' {\em Journal of Engineering and Applied Science}, vol.~70, p.~54, 2023.

\bibitem{chou2024deep}
J.~S. Chou, P.~L. Chong, and C.~Y. Liu, ``Deep learning-based chatbot by natural language processing for supportive risk management in river dredging projects,'' {\em Engineering Applications of Artificial Intelligence}, vol.~131, p.~107744, 2024.

\bibitem{kheddar2024automatic}
H.~Kheddar, M.~Hemis, and Y.~Himeur, ``Automatic speech recognition using advanced deep learning approaches: A survey,'' {\em Information Fusion}, p.~102422, 2024.

\bibitem{10584052}
A.~M. Hashan, C.~R. Dmitrievich, M.~A. Valerievich, D.~D. Vasilyevich, K.~N. Alexandrovich, and B.~B. Andreevich, ``Deep learning based speech recognition for hyperkinetic dysarthria disorder,'' in {\em 2024 IEEE Ural-Siberian Conference on Biomedical Engineering, Radioelectronics and Information Technology (USBEREIT)}, pp.~012--015, 2024.

\bibitem{10.5555/3202377}
J.~Peters, D.~Janzing, and B.~Schlkopf, {\em Elements of Causal Inference: Foundations and Learning Algorithms}.
\newblock The MIT Press, 2017.

\bibitem{9363924}
B.~Schölkopf, F.~Locatello, S.~Bauer, N.~R. Ke, N.~Kalchbrenner, A.~Goyal, and Y.~Bengio, ``Toward causal representation learning,'' {\em Proceedings of the IEEE}, vol.~109, no.~5, pp.~612--634, 2021.

\bibitem{maschler2020game}
M.~Maschler, E.~Solan, and S.~Zamir, {\em Game Theory}.
\newblock Cambridge University Press, 2nd~ed., 2020.

\bibitem{hu2022ore}
W.~Hu, X.~Liu, and Z.~Xie, ``Ore image segmentation application based on deep learning and game theory,'' in {\em World science: problems and innovations}, pp.~71--76, 2022.

\bibitem{ren2021game}
C.~Ren, Z.~Wu, D.~Xu, and W.~Xu, ``A game-theoretic analysis of deep neural networks,'' in {\em Algorithmic Aspects in Information and Management: 15th International Conference, AAIM 2021, Virtual Event, December 20--22, 2021, Proceedings}, pp.~369--379, Springer International Publishing, 2021.

\bibitem{li2021deep}
S.~Li, X.~Hu, and Y.~Du, ``Deep reinforcement learning and game theory for computation offloading in dynamic edge computing markets,'' {\em IEEE Access}, vol.~9, pp.~121456--121466, 2021.

\bibitem{yin2024game}
L.~Yin, S.~Lin, Z.~Sun, R.~Li, Y.~He, and Z.~Hao, ``A game-theoretic approach for federated learning: A trade-off among privacy, accuracy and energy,'' {\em Digital Communications and Networks}, vol.~10, no.~2, pp.~389--403, 2024.

\bibitem{hd2017}
H.~Ykhlef and D.~Bouchaffra, ``An efficient ensemble pruning approach based on simple coalitional games,'' {\em Information Fusion}, vol.~34, pp.~28--42, March 2017.

\bibitem{liu2024game}
S.~Liu, Y.~Wang, and X.-S. Gao, ``Game-theoretic unlearnable example generator,'' in {\em Proceedings of the AAAI Conference on Artificial Intelligence}, vol.~38, pp.~21349--21358, 2024.

\bibitem{meng2023efficient}
L.~Meng, Z.~Ge, P.~Tian, B.~An, and Y.~Gao, ``An efficient deep reinforcement learning algorithm for solving imperfect information extensive-form games,'' in {\em Proceedings of the AAAI Conference on Artificial Intelligence}, vol.~37, pp.~5823--5831, 2023.

\bibitem{xing2024deep}
Y.~Xing, D.~Hou, J.~Liu, H.~Yuan, A.~Verma, and W.~Shi, ``Deep learning and game theory for ai-enabled human-robot collaboration system design in industry 4.0,'' in {\em 2024 IEEE 14th Annual Computing and Communication Workshop and Conference (CCWC)}, (Las Vegas, NV, USA), pp.~8--13, 2024.

\bibitem{hazra2022applications}
T.~Hazra and K.~Anjaria, ``Applications of game theory in deep learning: a survey,'' {\em Multimedia Tools and Applications}, vol.~81, pp.~8963--8994, 2022.

\bibitem{Hazra2024}
T.~Hazra, K.~Anjaria, A.~Bajpai, and A.~Kumari, ``Applications of game theory in deep neural networks,'' in {\em Applications of Game Theory in Deep Learning}, SpringerBriefs in Computer Science, Springer, Cham, 2024.

\bibitem{tuckerman2023statistical}
M.~E. Tuckerman, {\em Statistical Mechanics: Theory and Molecular Simulation}.
\newblock Oxford University Press, 2nd~ed., 2023.

\bibitem{kollmannsberger2021deep}
S.~Kollmannsberger, D.~D’Angella, M.~Jokeit, and L.~Herrmann, ``Deep learning in computational mechanics,'' in {\em Deep Learning in Computational Mechanics}, pp.~55--84, Springer International Publishing, 2021.

\bibitem{PhysRevE.108.014309}
C.~Li, Z.~Huang, W.~Zou, and H.~Huang, ``Statistical mechanics of continual learning: Variational principle and mean-field potential,'' {\em Phys. Rev. E}, vol.~108, p.~014309, Jul 2023.

\bibitem{von2021game}
B.~von Stengel, {\em Game Theory Basics}.
\newblock Cambridge University Press, 2021.

\bibitem{roth2015gets}
A.~Roth, {\em Who Gets What—and Why}.
\newblock HarperCollins, 2015.

\bibitem{mendelson2024introducing}
E.~Mendelson and D.~Zwillinger, {\em Introducing Game Theory and its Applications}.
\newblock CRC Press, 2024.

\bibitem{liu2015deep}
Z.~Liu, P.~Luo, X.~Wang, and X.~Tang, ``Deep learning face attributes in the wild,'' {\em Proceedings of International Conference on Computer Vision (ICCV)}, 2015.

\bibitem{utkface}
Z.~Zhang and Y.~Song, ``Utkface.'' \url{https://susanqq.github.io/UTKFace/}, 2017.

\bibitem{kingma2014adam}
D.~P. Kingma and J.~Ba, ``Adam: A method for stochastic optimization,'' {\em arXiv preprint arXiv:1412.6980}, 2014.

\end{thebibliography}

\clearpage

\end{document}